\documentclass[review]{elsarticle}

\usepackage{lineno,hyperref}
\modulolinenumbers[200]
\usepackage{amsthm,amsmath,natbib}
\journal{ArXiv}
\theoremstyle{plain}
\newtheorem{thm}{Theorem}[section]
\theoremstyle{lemma}
\newtheorem{lemma}{Lemma}[section]
\theoremstyle{remark}

\theoremstyle{definition}
\newtheorem{definition}{Definition}[section]









\bibliographystyle{elsarticle-num}

\begin{document}

\begin{frontmatter}

\title{Comment on: Decomposition of structural learning about directed acyclic graphs \cite{xie}}


\author[mymainaddress]{Mohammad Ali Javidian\corref{mycorrespondingauthor}}
\ead{javidian@email.sc.edu}

\author[mymainaddress]{Marco Valtorta\corref{mycorrespondingauthor}}
\cortext[mycorrespondingauthor]{Corresponding author}
\ead{mgv@cse.sc.edu}

\address[mymainaddress]{Department of Computer Science \& Engineering, University of South Carolina, Columbia, SC, 29201, USA.}

\begin{abstract}
We propose an alternative proof concerning necessary and sufficient conditions to split the problem of searching for $d$-separators and building the skeleton of a DAG into small problems for every node of a separation tree $T$. The proof is simpler than the original \cite{xie}. The same proof structure has been used in \cite{jv3} for learning the structure of multivariate regression chain graphs (MVR CGs).
\end{abstract}

\begin{keyword}
{Conditional independence}\sep Structural learning\sep Decomposition \sep Directed acyclic graph\sep $d$-separation tree
\end{keyword}

\end{frontmatter}

\linenumbers

\section{Introduction}

In this paper we consider directed acyclic graphs (DAGs) and largely use the terminology of \citep{xie}, where the reader can also find further details. For the reader's convenience we just recall the definition of $d$-separation tree here:
\begin{definition}\label{septree}
	A tree $T$ with node set $C$ is said to be a $d$-separation tree for a DAG $G = (V,E)$ if
	\begin{itemize}
		\item $\cup_{C_i\in C}C_i=V$, and
		\item for any separator $S$ in $T$ with $V_1$ and $V_2$, $\langle V_1\setminus S,V_2\setminus S | S\rangle_G$. In other words, $V_1\setminus S$ and $V_2\setminus S$ are $d$-separated by $S$.
	\end{itemize}
\end{definition}

\section{Necessary and sufficient condition for decomposing structural learning of DAGs}
First we give several lemmas (Lemma 2.1-2.4) from \cite[Appendix A]{xie} to be used in the proof of the main theorem in \cite[Theorem 1]{xie}.

\begin{lemma}\label{lem1}
	Let $l$ be a path from $u$ to $v$, and $W$ be the set of all vertices on $l$ ($W$ may or may not contain $u$ and $v$).
	Suppose that (the endpoints of) a path $l$ is (are) $d$-separated by $S$. If $W\subseteq S$, then the path $l$ is $d$-separated by $W$ and by any set containing $W$.
\end{lemma}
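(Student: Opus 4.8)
The plan is to work directly from the standard characterization of a blocked path: a path $l$ from $u$ to $v$ is $d$-separated (blocked) by a set $S$ exactly when $l$ possesses a \emph{blocking vertex} with respect to $S$, i.e.\ an internal vertex $w$ of $l$ such that either $w$ is a non-collider on $l$ and $w\in S$, or $w$ is a collider on $l$ and neither $w$ nor any descendant of $w$ belongs to $S$. So I would begin by fixing, using the hypothesis that the endpoints of $l$ are $d$-separated by $S$ along $l$, a blocking vertex $w$ of $l$ relative to $S$.

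The next step is to rule out the collider case. A collider on $l$ has two edges of $l$ meeting at it head-to-head, hence it is necessarily an internal vertex of $l$ and therefore lies in $W$. Since $W\subseteq S$, that vertex lies in $S$, contradicting the requirement that a collider can block only when neither it nor any of its descendants is in $S$. Hence $w$ must be a non-collider on $l$ with $w\in S$.

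Then I would conclude: because $w$ lies on $l$ we have $w\in W$, and $w$ is a non-collider on $l$, so $w$ is itself a blocking vertex of $l$ with respect to $W$; thus $l$ is $d$-separated by $W$. Moreover, for any set $W'$ with $W\subseteq W'$ we still have $w\in W'$ and $w$ a non-collider on $l$, so the same vertex $w$ blocks $l$ with respect to $W'$, which gives the final assertion.

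The only delicate point — a mild one rather than a genuine obstacle — is the bookkeeping around the endpoints $u$ and $v$ flagged by the parenthetical remark. One has to observe that a blocking vertex is always internal (each endpoint is incident to just one edge of $l$, so it can be neither a collider nor the relevant kind of non-collider), so whether or not $W$ contains $u$ and $v$ is irrelevant and the argument goes through verbatim under the hypothesis $W\subseteq S$.
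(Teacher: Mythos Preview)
The paper does not supply its own proof of this lemma: Lemmas~2.1--2.4 are explicitly quoted from \cite[Appendix~A]{xie} without argument, and only Lemma~2.5 and Theorem~2.1 receive proofs here. Consequently there is no in-paper proof to compare your proposal against.

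That said, your argument is correct and is essentially the natural one. The key observation is exactly what you isolate: any blocking vertex for $l$ relative to $S$ cannot be a collider, because every collider on $l$ is an interior vertex and hence lies in $W\subseteq S$, whereas a collider blocks only when it (and all its descendants) are \emph{outside} the conditioning set. Thus the blocking vertex is a non-collider lying in $W$, and a non-collider in the conditioning set blocks regardless of what else is added, which handles both $W$ and any superset of $W$ simultaneously. Your remark about the endpoints is also on point: blocking vertices are always interior, so whether $u,v\in W$ is immaterial.
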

\begin{lemma}\label{lem2}
	Let $T$ be a $d$-separation tree for a DAG $G$, and $K$ be a separator of $T$ which separates $T$ into two
	subtrees $T_1$ and $T_2$ with variable sets $V_1$ and $V_2$ respectively. Suppose that $l$ is a path from $u$ to $v$ in $G$ where $u\in V_1\setminus K$ and $v\in V_2\setminus K$. Let $W$ denote the set of all vertices on $l$ ($W$ may or may not contain $u$ and $v$). Then the
	path $l$ is $d$-separated by $W\cap K$ and by any set containing $W\cap K$. 
\end{lemma}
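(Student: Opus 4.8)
The plan is to derive everything from the defining property of a $d$-separation tree, namely $\langle V_1\setminus K, V_2\setminus K\mid K\rangle_G$ (Definition~\ref{septree}), and then to show that $l$ is in fact blocked by $K$ \emph{at a non-collider} which lies in $W\cap K$. Once this is established, the claim about arbitrary supersets of $W\cap K$ is immediate: a non-collider of $l$ that belongs to the conditioning set blocks $l$, and this remains true when the conditioning set is enlarged.

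First I would record two consequences of the hypotheses. (i) Because $u\in V_1\setminus K$ and $v\in V_2\setminus K$, Definition~\ref{septree} gives $\langle V_1\setminus K, V_2\setminus K\mid K\rangle_G$, so $u$ and $v$ are $d$-separated by $K$, and hence $l$ is blocked by $K$. (ii) No edge of $G$ joins a vertex of $V_1\setminus K$ to a vertex of $V_2\setminus K$: such an edge would be a path between the two sides with no interior vertex, and no conditioning set can block a path with no interior vertex, so its endpoints would not be $d$-separated by $K$, contradicting Definition~\ref{septree}.

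The core of the argument is the claim that some interior non-collider of $l$ lies in $K$. I would prove this by contradiction: suppose that \emph{every} vertex of $l$ lying in $K$ is a collider on $l$. Write $l=\langle w_0=u,w_1,\dots,w_n=v\rangle$ and record, for each $w_m$, whether it lies in $V_1\setminus K$, in $V_2\setminus K$, or in $K$. By (ii), consecutive vertices of $l$ never lie on opposite sides; moreover two consecutive vertices of $l$ cannot both lie in $K$, since each would then be a collider on $l$ and this would orient the edge between them in both directions. Now let $w_{m^*}$ be the first vertex of $l$ lying in $V_2\setminus K$; it exists since $v$ is such a vertex, and $m^*\ge 2$ (the values $m^*=0$ and $m^*=1$ are ruled out by $u\in V_1\setminus K$ and by (ii)). Then $w_{m^*-1}$ is not in $V_1\setminus K$ (that would be a forbidden edge with $w_{m^*}$) and not in $V_2\setminus K$ (minimality of $m^*$), so $w_{m^*-1}\in K$ and is therefore a collider on $l$; and $w_{m^*-2}$ is not in $K$ (no two adjacent $K$-vertices) and not in $V_2\setminus K$ (minimality), so $w_{m^*-2}\in V_1\setminus K$ --- when $m^*=2$ this vertex is simply $u$. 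But then $\langle w_{m^*-2},w_{m^*-1},w_{m^*}\rangle$ is a path from a vertex of $V_1\setminus K$ to a vertex of $V_2\setminus K$ whose only interior vertex, $w_{m^*-1}$, is a collider lying in $K$; such a path is $d$-connecting given $K$, contradicting $\langle V_1\setminus K, V_2\setminus K\mid K\rangle_G$.

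Consequently some interior non-collider $w$ of $l$ lies in $K$, so $w\in W\cap K$. For any $Z$ with $W\cap K\subseteq Z$ we then have $w\in Z$, and since $w$ is a non-collider on $l$ it blocks $l$ relative to $Z$; taking $Z=W\cap K$ and $Z$ arbitrary gives exactly the statement. The only genuinely non-routine step is the contradiction above: the work lies in turning the \emph{global} separation property of the tree into one \emph{explicit} short $d$-connecting path, which needs the bookkeeping of which side each vertex of $l$ occupies together with the observation that no two $K$-vertices of $l$ are adjacent; the rest is a direct unwinding of the definitions, and this is the same proof template used for MVR chain graphs in \cite{jv3}.
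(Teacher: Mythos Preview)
The paper does not supply its own proof of Lemma~\ref{lem2}: Lemmas~\ref{lem1}--\ref{lem4} are quoted from \cite[Appendix~A]{xie} and used as black boxes, so there is no in-paper argument to compare your proposal against.

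Your argument is correct and self-contained. The key step---showing that some \emph{non-collider} on $l$ must lie in $K$, by locating the first crossing $w_{m^*}$ into $V_2\setminus K$ and exhibiting, under the contrary hypothesis, the length-two $d$-connecting path $\langle w_{m^*-2},w_{m^*-1},w_{m^*}\rangle$ across $K$---is clean; once a non-collider $w\in W\cap K$ is found, the fact that enlarging the conditioning set cannot unblock a path at a non-collider gives the conclusion for every superset of $W\cap K$. One small point worth stating explicitly: your trichotomy ``each $w_m$ lies in $V_1\setminus K$, in $V_2\setminus K$, or in $K$'' uses $V_1\cap V_2=K$, i.e., the running-intersection property of the junction tree underlying the construction in \cite{xie}; this is standard in that setting but is not literally part of Definition~\ref{septree} as reproduced here.
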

\begin{lemma}\label{lem3}
	Let $u$ and $v$ be two non-adjacent vertices in DAG $G$, and let $l$ be a path from $u$ to $v$. If $l$ is not contained in
	$An(u\cup v)$, then $l$ is $d$-separated by any subset S of $an(u\cup v)$.
\end{lemma}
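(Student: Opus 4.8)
The plan is to exploit the hypothesis that $l$ is not contained in $An(u\cup v)$ in order to produce a single \emph{collider} on $l$ which, together with all of its descendants, misses $S$; such a vertex blocks $l$ directly from the definition of $d$-separation, so the path is $d$-separated by $S$. Write the path as $l\colon u=w_0,w_1,\dots,w_k=v$.

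First I would record a one-line observation: if a vertex $w_i$ on $l$ satisfies $w_i\notin An(u\cup v)$, then no descendant of $w_i$ --- $w_i$ itself included --- lies in $An(u\cup v)$, for otherwise concatenating a directed path from $w_i$ down to such a descendant with a directed path from that descendant into $u$ or $v$ would put $w_i$ into $An(u\cup v)$. Since $S\subseteq an(u\cup v)\subseteq An(u\cup v)$, such a $w_i$ and all of its descendants therefore avoid $S$.

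The crux is then the claim that \emph{if every collider on $l$ belongs to $An(u\cup v)$, then $l\subseteq An(u\cup v)$}; combined with the hypothesis, this immediately yields a collider $c$ on $l$ with $c\notin An(u\cup v)$. To prove the claim I would take an arbitrary vertex $w_i$ on $l$, dispose of the cases $w_i\in\{u,v\}$ and ``$w_i$ a collider'' trivially (using the assumption in the latter), and treat the remaining case that $w_i$ is an internal non-collider. Such a $w_i$ has at least one path-edge pointing away from it; I would start along that edge and walk along $l$, noting that every vertex so reached arrives through an incoming edge and, being a non-collider, must send its other path-edge onward --- so the walk traces a directed path in $G$. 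That directed path can only terminate at an endpoint $u$ or $v$, or at a vertex whose next path-edge points back toward us, i.e.\ a collider on $l$; in either case $w_i$ is an ancestor of a vertex already known to lie in $An(u\cup v)$, hence $w_i\in An(u\cup v)$. (Note that the non-adjacency of $u$ and $v$ is never used.)

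Finally, with $c$ as above, the opening observation gives that neither $c$ nor any of its descendants lies in $S$, so the collider $c$ blocks $l$; hence $l$ is $d$-separated by $S$, and since $S$ was an arbitrary subset of $an(u\cup v)$ the lemma follows. I expect the only real difficulty to be the bookkeeping in the crux step --- pinning down that ``leaving a non-collider by an outgoing edge without ever forming a collider'' traces a directed sub-path of $l$ that must end at an endpoint or at a collider, and checking the degenerate configurations (a source on $l$, the sub-path reaching $u$ or $v$ immediately, two colliders being consecutive). With that combinatorial fact in hand, the remainder is a direct appeal to the definitions of ancestor and of $d$-separation.
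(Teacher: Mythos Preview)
Your argument is correct. Note, however, that the present paper does not supply its own proof of this lemma: Lemmas~\ref{lem1}--\ref{lem4} are quoted from \cite[Appendix~A]{xie} and invoked here without proof, so there is no in-paper argument to compare against. The route you take --- proving by contrapositive that a path not contained in $An(u\cup v)$ must carry a collider $c\notin An(u\cup v)$, via the observation that from any internal non-collider one can trace a directed sub-path of $l$ to an endpoint or a collider, and then noting that such a $c$ together with all its descendants misses every $S\subseteq an(u\cup v)$ --- is the standard one for this fact. Your side remark that the non-adjacency hypothesis on $u$ and $v$ is never used is also accurate; the statement holds for any path between $u$ and $v$.
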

\begin{lemma}\label{lem4}
	Let $T$ be a $d$-separation tree for a DAG $G$. For any vertex $u$ there exists at least one node of $T$ that contains $u$ and $pa(u)$.
\end{lemma}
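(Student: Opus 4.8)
The plan is to view, for every vertex $x\in V$, the family $T[x]$ of all nodes of $T$ that contain $x$ as a subtree of $T$, and then to read off the desired node from the Helly property for subtrees of a tree, namely that every pairwise-intersecting family of subtrees of a tree has a node common to all of them.

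The first step is a running-intersection observation: for each $x\in V$ the set $T[x]$ induces a connected subtree of $T$. Suppose not; then $x$ lies in two nodes that fall into the two subtrees $T_1,T_2$ obtained by deleting some edge $e$ of $T$. Let $S$ be the separator of $e$ and $V_1,V_2$ the corresponding variable sets. If $x\notin S$ then $x\in(V_1\setminus S)\cap(V_2\setminus S)$, which contradicts $\langle V_1\setminus S,V_2\setminus S\mid S\rangle_G$ of Definition~\ref{septree} (a vertex cannot be $d$-separated from itself). Hence $x$ lies in the separator attached to every edge on the path in $T$ between any two nodes containing $x$, so $T[x]$ is connected.

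The second step is to show that the members of $\mathcal F:=\{T[u]\}\cup\{T[w]:w\in pa(u)\}$ pairwise intersect. \emph{Case 1: $T[u]$ and $T[w]$ with $w\in pa(u)$.} If $T[u]\cap T[w]=\emptyset$ then, being disjoint subtrees of $T$, $T[u]$ and $T[w]$ are separated by an edge $e$ of $T$; let $S$ be its separator and $V_1,V_2$ the variable sets of the two sides, with $T[u]$ entirely on the $V_1$-side and $T[w]$ entirely on the $V_2$-side. Then $u\in V_1\setminus S$ and $w\in V_2\setminus S$ (indeed $u$ occurs only in nodes of $T[u]$, while $S$ is contained in a node on the $V_2$-side, so $u\notin S$; symmetrically $w\notin S$). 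Now $\langle V_1\setminus S,V_2\setminus S\mid S\rangle_G$ would $d$-separate $u$ from $w$ by $S$, which is impossible since $w\in pa(u)$ makes the single edge $w\to u$ a path that no set blocks. \emph{Case 2: $T[w_1]$ and $T[w_2]$ with $w_1\neq w_2$ in $pa(u)$.} If $T[w_1]\cap T[w_2]=\emptyset$, again choose an edge $e$ separating them, with separator $S$ and variable sets $V_1\ni w_1$, $V_2\ni w_2$, so $w_1\in V_1\setminus S$ and $w_2\in V_2\setminus S$. By Case 1 the subtree $T[u]$ meets both $T[w_1]$ and $T[w_2]$, hence, being connected, it contains both endpoints of $e$, so $u\in S$. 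But now the path $w_1\to u\leftarrow w_2$ is $d$-connecting given $S$: its only intermediate vertex $u$ is a collider lying in $S$, and the path has no non-collider. This contradicts $\langle V_1\setminus S,V_2\setminus S\mid S\rangle_G$.

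Finally, $\mathcal F$ is a pairwise-intersecting family of subtrees of the tree $T$, so by the Helly property for subtrees there is a node $C^{\ast}$ of $T$ belonging to every member of $\mathcal F$; equivalently, $u\in C^{\ast}$ and $pa(u)\subseteq C^{\ast}$, which is the claim. I expect the main obstacle to be Case 2, the case of non-adjacent parents: one has to notice that $T[u]$ necessarily straddles the separating edge, which forces $u$ into the separator $S$ and thereby opens the v-structure $w_1\to u\leftarrow w_2$. Once this is seen, the running-intersection step and the appeal to Helly are routine, and the cases $|pa(u)|\le 1$ are immediate.
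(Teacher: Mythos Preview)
Your argument is correct: the running-intersection property of each $T[x]$, the two pairwise-intersection cases, and the appeal to the Helly property for subtrees of a tree all go through as you state them. The key observation in Case~2---that $T[u]$, meeting both $T[w_1]$ and $T[w_2]$, must contain both endpoints of the separating edge and hence force $u\in S$, thereby activating the collider on $w_1\to u\leftarrow w_2$---is exactly the right move.

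There is, however, nothing in this paper to compare your proof against. The authors do not prove Lemma~2.4 themselves; they quote it (together with Lemmas~2.1--2.3) verbatim from Appendix~A of \cite{xie} and use it only as a black box in their new proofs of Lemma~2.5 and Theorem~2.1. Your proof therefore stands as a self-contained argument independent of whatever approach \cite{xie} takes.
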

Now, we prove a new lemma.
\begin{lemma}\label{lem5}
	Let $T$ be a $d$-separation tree for a DAG $G$ and $C$ a node of $T$. Let $u$ and $v$ be two vertices in $C$ which
	are non-adjacent in $G$, then there exists a node $C'$ of $T$ containing $u, v$ and a set $S$ such that $S$ $d$-separates $u$ and $v$ in $G$.
\end{lemma}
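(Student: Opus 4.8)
The plan is to reduce the search to a node that already carries a parent set as a $d$-separator (via Lemma~\ref{lem4}), and, if that node happens to miss $u$, to repair the choice by walking along $T$.

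First, since $u$ and $v$ are non-adjacent and $G$ is acyclic, at least one of them is not a descendant of the other; after possibly interchanging their names, assume $u\notin de(v)$. Then $\langle u,v\mid pa(v)\rangle_G$ — this is the directed local Markov property, which can also be derived from Lemma~\ref{lem3} (it disposes of the $u$–$v$ paths that leave $An(u\cup v)$, the remaining ones being blocked at a non-collider of $pa(v)$) — and $pa(v)\cap\{u,v\}=\emptyset$ because $u,v$ are non-adjacent. By Lemma~\ref{lem4} there is a node $C_0$ of $T$ with $\{v\}\cup pa(v)\subseteq C_0$. If $u\in C_0$ we are finished at once: take $C'=C_0$ and $S=pa(v)$ (note $S\cap\{u,v\}=\emptyset$).

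Now suppose $u\notin C_0$; here I would argue by induction on the number of nodes of $T$ (the case just treated being the base, and in particular the only possibility when $T$ has one node). Walk along the $T$-path $C=D_0,D_1,\dots,D_r=C_0$, $r\ge1$. Each $D_i$ contains $v$, and since $u\in D_0$ but $u\notin D_r$ there is an index $t$ with $u\in D_{t-1}$, $u\notin D_t$, and no node on the $D_t$-side of the edge $e=\{D_{t-1},D_t\}$ containing $u$. Let $K$ be the separator of $e$ and let $V^-$, $V^+$ be the unions of the nodes on the $D_{t-1}$-side, resp.\ the $D_t$-side, so that $C\subseteq V^-$, $C_0\subseteq V^+$, $u\in V^-\setminus K$, $v\in V^-\cap V^+=K$, $pa(v)\subseteq V^+$, and $\{u,v\}\cup K\subseteq D_{t-1}$. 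The defining property of a $d$-separation tree gives $\langle V^-\setminus K,V^+\setminus K\mid K\rangle_G$, so $G$ has no edge joining $V^-\setminus K$ to $V^+\setminus K$ and every $u$–$v$ path meeting $V^+\setminus K$ must first meet $K$. The subtree of $T$ on the $D_{t-1}$-side has strictly fewer nodes, is a $d$-separation tree for the induced sub-DAG on $V^-$, and still has $C$ as a node containing the (still non-adjacent) pair $u,v$; so the inductive hypothesis yields a node $\widehat C\subseteq V^-$ with $u,v\in\widehat C$ and a set $S_0\subseteq\widehat C\setminus\{u,v\}$ that $d$-separates $u$ and $v$ in that sub-DAG.

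Finally I would set $S:=S_0\cup(K\setminus\{v\})$ (so $S\cap\{u,v\}=\emptyset$, since $u\in V^-\setminus K$) and check $\langle u,v\mid S\rangle_G$: conditioning on $K\setminus\{v\}$ screens $u$ off from $V^+\setminus K$ and hence blocks every $u$–$v$ path that ever leaves $V^-$, while $S_0$ handles the paths that stay inside $V^-$. The main obstacle — and the place where the real work, and presumably the streamlining over \cite{xie}, lies — is this last verification together with the housekeeping needed to put $S$, $u$ and $v$ inside a single node $C'$: a collider lying in $K$ is \emph{opened} by conditioning on $K\setminus\{v\}$, so one must show (via Lemmas~\ref{lem1}, \ref{lem2}, \ref{lem3} applied along the path) that no such opening produces a fresh active $u$–$v$ path, and that $S_0$ can be taken inside $D_{t-1}$ (using $K\subseteq D_{t-1}$), so that $C'=D_{t-1}$ works. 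I expect this path bookkeeping to be the only genuinely delicate step.
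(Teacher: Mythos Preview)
Your inductive scheme diverges from the paper's argument and leaves the essential work undone. The paper does not induct: after locating (via Lemma~\ref{lem4}) a node $C_1\supseteq\{u\}\cup pa(u)$ and, if $v\notin C_1$, the closest node $C_2$ to $C_1$ containing both $u$ and $v$, it \emph{writes down} the separator explicitly as $S=an(u\cup v)\cap C_2\subseteq C_2$ and verifies directly, path by path, that $S$ blocks every $u$--$v$ path (Lemma~\ref{lem3} for paths leaving $An(u\cup v)$; for paths inside $An(u\cup v)$ the first step out of $u$ is necessarily $u\gets x$, and either $x\in C_2\subseteq S$ or Lemma~\ref{lem2} applied to the separator $K$ between $C_2$ and $C_1$ blocks the sub-path from $x$ to $v$). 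No recursion, no sub-DAGs.

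Your proposal, by contrast, defers precisely the hard step. Two concrete problems. First, the lemma requires $S\subseteq C'$ for a single tree node $C'$ (this is what feeds Theorem~\ref{thm1}(ii)); your $S=S_0\cup(K\setminus\{v\})$ has $K\subseteq D_{t-1}$ but $S_0\subseteq\widehat C$ for some other node of the subtree, and nothing forces $S_0\subseteq D_{t-1}$ --- ``$S_0$ can be taken inside $D_{t-1}$'' is asserted, not argued. Second, the $d$-separation itself is not established: $S_0$ blocks $u$--$v$ paths only in the induced sub-DAG on $V^-$, and adjoining $K\setminus\{v\}$ can open colliders on those very paths (you note this yourself); meanwhile $\langle V^-\setminus K,\,V^+\setminus K\mid K\rangle_G$ is separation given $K$, not given $K\setminus\{v\}$ or given $S$, and $v\in K$, so Lemma~\ref{lem2} does not apply to the $u$--$v$ path as stated. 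You correctly identify this ``path bookkeeping'' as the crux, but that is exactly the content of the lemma; the induction has not reduced the difficulty, only relocated it. The paper's explicit choice $S=an(u\cup v)\cap C_2$ sidesteps both issues at once.
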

\begin{proof}
	Without loss of generality, we can suppose that $v$ is not a descendant of $u$ in $G$, i.e., $v\in nd(u)$. By Lemma \ref{lem4}, there is a
	node $C_1$ of $T$ that contains $u$ and $pa(u)$. If $v\in C_1$, then $S$ defined as the parents of $u$ $d$-separates $u$ from $v$.
	\begin{figure}[h]
		\centering
		\includegraphics[scale=.75]{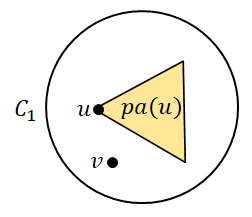}
		 \label{Fig:case1}
	\end{figure}
	
	If $v\not\in C_1$, choose the node $C_2$ which is the closest node in $T$ to the node $C_1$ that contains $u$ and $v$. Consider that there is at least one parent $p$ of $u$ that is not contained
	in $C_2$. Thus there is a separator $K$ connecting $C_2$ toward $C_1$ in $T$ such that $K$ $d$-separates $p$ from all vertices in $C_2\setminus K$. Note that on the path from $C_1$ to $C_2$ in $T$, all
	separators must contain $u$, otherwise they cannot separate $C_1$ from $C_2$. So, we have $u\in K$ but $v\not\in K$ (if $v\in K$, then $C_2$ is not the closest node of $T$ to the node $C_1$). In fact, for every parent $p'$ of $u$ that is contained in $C_1$ but not in $C_2$,  $K$ separates $p'$ from all vertices in $C_2\setminus K$, especially the vertex $v$.
	\begin{figure}[h]
		\centering
		\includegraphics[scale=.7]{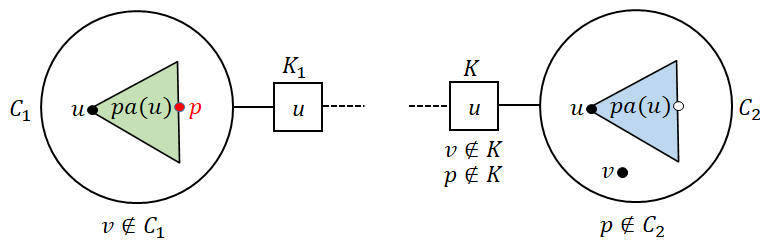}
		\label{Fig:case2}
	\end{figure}

		Define $S=(an(u\cup v)\cap C_2)$. We show that $u$ and $v$ are $d$-separated by $S$, that is, every path, say $l$, between $u$
	and $v$ in $G$ is blocked by $S$.
	
	If $l$ is not contained in $An(u\cup v)$, then we obtain from Lemma \ref{lem3} that $l$ is blocked by $S$.
	
	When $l$ is contained in $An(u\cup v)$, let $x$ be adjacent to $u$ on $l$, that is, $l =
	(u, x, \dots , v)$. The edge between $u$ and $x$ must be oriented as $u\gets x$, that is, $x$ is a parent of $u$. The case $u\to x$ is impossible, because:
	\begin{itemize}
		\item $x\in an(u)$. In this case we have a directed cycle.
		\item $x\in an(v)$. In this case $u$ is an ancestor of $v$ i.e., $v$ is a descendant of $u$ in $G$.
	\end{itemize} 
If $x$ is contained in $C_2$, then $l$ is blocked by $x$ which is contained in $S$. If the parent $x$ of $u$ is not contained in $C_2$, as shown above, we have that $x$ and $v$ are $d$-separated by $K$. By
Lemma \ref{lem2}, we can obtain that the sub-path $l'$ from $x$ to $v$ can be $d$-separated by $W\cap K$ where $W$ denotes the set of
all vertices between $x$ and $v$ (not containing $x$ and $v$) on $l'$. Since $S\supseteq (W\cap K)$, we obtain from Lemma \ref{lem2} that $S$ also blocks $l'$. Hence the path $l$ is blocked by $S$.
\end{proof}
\begin{thm}\label{thm1}
	Let $T$ be a $d$-separation tree for a DAG $G$. Vertices $u$ and $v$ are $d$-separated in $G$ if and
	only if (i) $u$ and $v$ are not contained together in any node $C$ of $T$ or (ii) there exists a node $C$ that contains both $u$
	and $v$ such that a subset $S$ of $C$ $d$-separates $u$ and $v$.
\end{thm}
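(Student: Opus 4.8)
The plan is to obtain Theorem~\ref{thm1} as a short consequence of Lemmas~\ref{lem3},~\ref{lem4} and~\ref{lem5}, treating the two directions of the ``if and only if'' separately. I will use the standard reading that ``$u$ and $v$ are $d$-separated in $G$'' means precisely that $u$ and $v$ are \emph{non-adjacent} in $G$: an edge between $u$ and $v$, were it present, would be an active path under every conditioning set, and conversely a non-adjacent pair in a DAG is $d$-separated by $an(u\cup v)$.

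\emph{Left to right.} Suppose $u$ and $v$ are $d$-separated in $G$, hence non-adjacent. If no node of $T$ contains both of them, alternative~(i) holds and there is nothing to prove. Otherwise some node $C$ of $T$ contains both $u$ and $v$; since they are non-adjacent, Lemma~\ref{lem5} applied to $C$ yields a node $C'$ of $T$ containing $u$ and $v$ together with a set $S$ that $d$-separates them. Reading off the proof of Lemma~\ref{lem5}, $S$ is either $pa(u)$ (chosen inside a node containing $u$ and $pa(u)$) or $an(u\cup v)\cap C'$; in both cases $S\subseteq C'$, which is exactly alternative~(ii).

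\emph{Right to left.} If alternative~(ii) holds, then by hypothesis some set $S$ $d$-separates $u$ and $v$, so $u$ and $v$ are $d$-separated in $G$. Assume instead that alternative~(i) holds, i.e.\ no node of $T$ contains both $u$ and $v$. First I check that $u$ and $v$ are non-adjacent in $G$: if they were adjacent, say $u\in pa(v)$, then Lemma~\ref{lem4} would produce a node of $T$ containing $v$ and $pa(v)$, hence containing both $u$ and $v$, contradicting~(i); the case $v\in pa(u)$ is symmetric. Being non-adjacent, $u$ and $v$ are $d$-separated in $G$, concretely by $S=an(u\cup v)$: any path $l$ from $u$ to $v$ that leaves $An(u\cup v)$ is blocked by $S$ by Lemma~\ref{lem3}, while for a path $l$ contained in $An(u\cup v)$ every interior vertex of $l$ lies in $an(u\cup v)=S$, and a short argument (consecutive interior vertices cannot both be colliders, and a lone interior collider $u\to x\gets v$ would force $x\notin An(u\cup v)$) shows that $l$ has a non-collider, at which $l$ is blocked by $S$.

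The whole argument is essentially bookkeeping around Lemmas~\ref{lem4} and~\ref{lem5}; the one spot that needs care is case~(i) of the right-to-left direction. It is tempting there to imitate the separator reasoning of Lemmas~\ref{lem1}--\ref{lem2} and hunt for a separator $K$ of $T$ placing $u$ and $v$ on opposite sides, but this can genuinely fail, because the nodes of $T$ containing $u$ need not form a subtree of $T$; the clean route is to extract non-adjacency from Lemma~\ref{lem4} and then exhibit the $d$-separating set directly.
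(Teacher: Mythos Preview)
Your proof is correct, and the forward direction together with the sufficiency of~(ii) match the paper essentially verbatim (the paper simply cites Lemma~\ref{lem5}). For the sufficiency of~(i), however, you and the paper diverge. The paper stays inside the tree: it picks nodes $C_1\ni u$ and $C_2\ni v$, walks along the $C_1$--$C_2$ path in $T$ to locate the farthest node $C_1'$ still containing $u$ and the farthest node $C_2'$ (from the other end) still containing $v$, observes $C_1'\neq C_2'$ by hypothesis~(i), and then invokes Lemma~\ref{lem2} with any separator lying between $C_1'$ and $C_2'$. You instead leave the tree entirely: Lemma~\ref{lem4} gives non-adjacency, after which you exhibit $an(u\cup v)$ as an explicit $d$-separating set via Lemma~\ref{lem3} plus an elementary collider count. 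Your route avoids Lemma~\ref{lem2} altogether; the paper's route keeps the argument anchored in the separation-tree structure, which is more in keeping with the decomposition theme.

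One correction to your closing commentary: the assertion that the separator approach ``can genuinely fail, because the nodes of $T$ containing $u$ need not form a subtree of $T$'' is wrong. The $d$-separation tree axiom forces the running intersection property. Indeed, if $u$ lay in nodes on both sides of an edge of $T$ with separator $S$ but $u\notin S$, then $u\in(V_1\setminus S)\cap(V_2\setminus S)$ and the required relation $\langle V_1\setminus S,\,V_2\setminus S\mid S\rangle_G$ would ask $u$ to be $d$-separated from itself. Hence the nodes of $T$ containing $u$ (respectively $v$) do form subtrees; under~(i) these subtrees are disjoint, so an edge of $T$ with separator containing neither $u$ nor $v$ genuinely exists, and the paper's appeal to Lemma~\ref{lem2} is sound. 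Your alternative argument for~(i) is perfectly valid, but the paper's is not broken.
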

\begin{proof}[Proof of Theorem \ref{thm1}] 
	\noindent ($\Rightarrow$) Assume that $u$ and $v$ are $d$-separated by $S\subseteq V$ in $G$. If condition (i) is the case, nothing remains to prove. Otherwise, Lemma \ref{lem5} implies condition (ii).
	
	\noindent ($\Leftarrow$) Assume condition (i), i.e., that $u$ and $v$ are not contained together in any node $C$ of $T$. Also, assume that $C_1$ and $C_2$ are two nodes of $T$ that contain $u$ and $v$, respectively. Consider that $C_1'$ is the most distant node from $C_1$, between $C_1$ and $C_2$, that contains $u$ and $C_2'$ is the most distant node from $C_2$, between $C_1$ and $C_2$, that contains $v$. Note that it is possible that $C_1'=C_1$ or $C_2'=C_2$. By the assumption, $C_1'\ne C_2'$. Any separator between $C_1'$ and $C_2'$ satisfies the assumptions of Lemma \ref{lem2}, because it does not contain $u$ or $v$ (otherwise we have a contradiction with the way that we chose $C_1'$ or $C_2'$, or with condition (i)). The sufficiency of condition (i) is given by Lemma \ref{lem2}.
	
	The sufficiency of
	conditions (ii) is trivial by the definition of $d$-separation.
\end{proof}

\section*{Acknowledgements}
This work has been partially supported by Office of Naval Research grant ONR N00014-17-1-2842.  This research is based upon work supported in part by the Office of the Director of National
Intelligence (ODNI), Intelligence Advanced Research Projects Activity (IARPA), award/contract
number 2017-16112300009. The views and conclusions contained therein are those of the authors
and should not be interpreted as necessarily representing the official policies, either expressed or implied,
of ODNI, IARPA, or the U.S. Government. The U.S. Government is authorized to reproduce
and distribute reprints for governmental purposes, notwithstanding annotation therein.

\section*{References}

\bibliography{mybibfile}

\end{document}